\def\eqref#1{equation~\ref{#1}}
\def\1{\bm{1}}
\DeclareMathAlphabet{\mathsfit}{\encodingdefault}{\sfdefault}{m}{sl}
\SetMathAlphabet{\mathsfit}{bold}{\encodingdefault}{\sfdefault}{bx}{n}
\newtheorem{lemma}{Lemma}[section]
\newenvironment{proof}{\par\noindent\textbf{Proof.} }{\hfill$\square$\par}
\title{Defending Diffusion Models Against Membership Inference Attacks via Higher-Order Langevin Dynamics}
\author{\name Benjamin Sterling \email benjamin.sterling@stonybrook.edu \\
      \addr Department of Applied Math \& Statistics\\
      Stony Brook University
      \AND
      \name Yousef El-Laham \email yousef.ellaham@stonybrook.edu \\
      \addr Department of Electrical and Computer Engineering\\
      Stony Brook University
      \AND
      \name M\'onica Bugallo \email monica.bugallo@stonybrook.edu \\
      \addr Department of Electrical and Computer Engineering\\
      Stony Brook University
}
\begin{document}

\maketitle

\begin{abstract}
Recent advances in generative artificial intelligence applications have raised new data security concerns. This paper focuses on defending diffusion models against membership inference attacks. This type of attack occurs when the attacker can determine if a certain data point was used to train the model. Although diffusion models are intrinsically more resistant to membership inference attacks than other generative models, they are still susceptible. The defense proposed here utilizes critically-damped higher-order Langevin dynamics, which introduces several auxiliary variables and a joint diffusion process along these variables. The idea is that the presence of auxiliary variables mixes external randomness that helps to corrupt sensitive input data earlier on in the diffusion process. This concept is theoretically investigated and validated on a toy dataset and the CIFAR-10 dataset using the Area Under the Receiver Operating Characteristic (AUROC) curves and the FID metric.

\end{abstract}

\section{Introduction}

Diffusion models \citep{diffusion2015, diffusiondenoising} have been shown to be fundamentally less susceptible to data security issues than other generative models such as GANs \citep{MIA_DM_examples}. However, recent work has shown that they are still vulnerable to Backdoor Attacks, Membership Inference Attacks (MIA), and Adversarial Attacks \citep{DMPrivacySurvey}. Defense against MIA is desirable, especially if the model is trained on sensitive data, such as medical data or sensitive Intellectual Property. The standard for privacy surrounding Diffusion Models is Differentially Private Diffusion Models (DPDM) \citep{dockhorn2023differentially}. DPDM combines the Differentially Private Stochastic Gradient Descent (DP-SGD) technique \citep{DPSGD} and continuous-time diffusion models \citep{diffusioncts}. The quality of the samples generated with this method is directly related to the level of privacy chosen. This is commonly known as the privacy versus utility tradeoff.


Concurrently, the use of critically-damped and higher-order Langevin dynamics has been explored in other works with regards to continuous-time diffusion models. The seminal work of \cite{dockhorn2021score} introduced critically-damped Langevin dynamics (CLD), where a single auxiliary variable denoted \emph{velocity} was augmented to the diffusion process to smooth the stochastic process trajectories. Smoothing is often desirable because it resembles the continuity of real-world data. Third-Order Langevin-Dynamics (TOLD) were introduced \citep{hold} after CLD to add another auxiliary variable called \emph{acceleration}, which had the effect of adding an extra smoothing component to the data. The authors additionally proposed in theory how to implement Higher-Order Langevin dynamics (HOLD) \citep{hold}. Since its invention, TOLD has been used in audio generation \citep{shi2024langwave} and image restoration tasks \citep{shi2024noisy}. One further improvement to TOLD/HOLD is to reparameterize so that the diffusion process is critically-damped, resulting in critically-damped higher-order Langevin dynamics (HOLD++) \citep{sterling}. It can be shown that critical damping is an optimal strategy in terms of convergence of the forward process. This makes HOLD++ an ideal tool to analyze how model dimensionality, among other factors, influences Membership Privacy.


Around the same time differential privacy was applied to diffusion models, there have been independently-developed attacks targeting diffusion models. The first such MIA, to our knowledge, was SecMI \citep{duan}. It was developed only to target discrete time diffusion models. It uses the diffusion model's trained score network to approximate the forward and backward processes as deterministic processes, and it exploits the fact that the score network is optimized only on training data. The same authors further refine SecMI to Proximal Initialization (PIA) \citep{kong2024an}. It exploits the same nature of the score network that SecMI does, but it also provides a continuous-time version.

The goal of this work is to enhance the defenses of diffusion models against membership inference attacks, beyond standard differential privacy. Currently, the main defenses against membership inference attacks fall into the categories of differential privacy, $L_2$ regularization, and knowledge distillation \citep{DMPrivacySurvey}. Our focus is on applying Critically-Damped Higher-Order Langevin Dynamics (HOLD++) \citep{nold} to achieve a base level of differential privacy, and arguing that it theoretically defends against real-world membership inference attacks. This theoretical defense is validated on both a toy and the CIFAR-10 dataset \citep{cifar10}.


\section{Background}

Here we will briefly review how traditional continuous diffusion models \citep{diffusioncts} apply to PIA; PIA will be used as a representative of such membership inference attacks. Diffusion models are a method of generating samples from an unknown intractable data distribution. They possess a forward process that transforms training data into noise, for the purpose of learning the score, and a backward process, for the purpose of generating synthetic samples from the data distribution. If the forward process of our model is $d\mathbf{x}_t = \mathbf{f}_t(\mathbf{x}_t)dt + g_td\mathbf{w}$, then the deterministic reverse process is $d\mathbf{x}_t = \left(\mathbf{f}_t(\mathbf{x}_t) - \frac{1}{2}g_t^2 \nabla_{\mathbf{x}_t}\log p_t(\mathbf{x})\right)dt$ \citep{diffusioncts}. In practice, $\nabla_{\mathbf{x}_t} \log p_t(\mathbf{x}_t)$ is estimated with a neural network $s_\theta(\mathbf{x}_t, t)$. The PIA approach is to calculate the following attack metric for different data points: 
\[R_{t,p} = \left|\left|\mathbf{f}_t(\mathbf{x}_t) - \frac{1}{2}g_t^2 \mathbf{s}_\theta(\mathbf{x}_t, t) \right|\right|_p,\]
where $||.||_p$ denotes the $p$-norm. The metric may be interpreted as the jump-size of the process. Data points with a comparatively lower attack metric are more likely to be within the training dataset because $\mathbf{s}_\theta(\mathbf{x}_t,t)$ were trained with them. Therefore, PIA thresholds this metric and uses threshold testing to classify training and hold out data.

\section{Problem Formulation}
\label{sec:problemformulation}
This section will review HOLD++ and how to apply PIA to this specific diffusion method. It is argued here that HOLD++ is better at defending against PIA than traditional diffusion models because of its structure. 
Following \cite{nold} and the previous section, we define the forward SDE of HOLD++ as:
\begin{align*}
    d\mathbf{x}_t &= \mathcal{F} \mathbf{x}_t dt + \mathcal{G} d\mathbf{w}, \\
    \mathbf{F} &= \sum_{i=1}^{n-1}\gamma_i\left(\mathbf{E}_{i, i+1}-\mathbf{E}_{i+1,i}\right) -\xi\mathbf{E}_{n,n}, \quad \mathbf{G} = \sqrt{2\xi L^{-1}}\mathbf{E}_{n,n},\\
    \mathcal{F} &= \mathbf{F} \otimes \mathbf{I}_d, \quad \mathcal{G} = \mathbf{G} \otimes \mathbf{I}_d.
\end{align*}

where $\mathbf{w}$ is a standard Brownian motion, $\mathbf{E}_{i,j}$ is the zero matrix with a single $1$ position $(i,j)$, and $\gamma_1,\ldots \gamma_{n-1}, \xi$ are HOLD++ parameters. For example when $n=3$:
\begin{align*}
\mathbf{F} &=
\begin{bmatrix}
     0 & \gamma_1 & 0  \\
     -\gamma_1 & 0 & \gamma_2  \\
     0 & -\gamma_2 & -\xi \\
\end{bmatrix}, \quad
\mathbf{G} = \begin{bmatrix}
     0 & 0 & 0 \\
     0 & 0 & 0 \\
     0 & 0 & \sqrt{2\xi L^{-1}} \\
\end{bmatrix}.
\end{align*}
Here, the data variable is represented by $\mathbf{q}_0$, auxiliary variables are drawn according to $\mathbf{p}_0, \mathbf{s}_0, \ldots \sim \mathcal{N}(\mathbf{0}, \beta L^{-1}\mathbf{I})$, and $\mathbf{x}_0 = (\mathbf{q}_0^T, \mathbf{p}_0^T, \mathbf{s}_0^T, \ldots)^T$. It is shown in detail in \cite{nold} that the mean and covariance of the forward process are given by:
\begin{equation}
\begin{aligned}
    \boldsymbol{\mu}_t &= \exp(\mathcal{F}t)\mathbf{x}_0,  \\
    \boldsymbol{\Sigma}_t &= L^{-1}\mathbf{I} + \exp(\mathcal{F}t)\left( \boldsymbol{\Sigma}_0 - L^{-1} \mathbf{I}\right)\exp(\mathcal{F}t)^T,
\end{aligned}
\label{eq:fwdsolutions}
\end{equation}
where $\exp(\cdot)$ is the matrix exponential map. One may sample from this distribution by taking the Cholesky Decomposition of $\boldsymbol{\Sigma}_t$, $\mathbf{L}_t$, and performing
\begin{equation}
    \mathbf{x}_t = \boldsymbol{\mu}_t + \mathbf{L}_t \boldsymbol{\epsilon},
    \label{eq:forwardsampling}
\end{equation}
where $\boldsymbol{\epsilon} = (\boldsymbol{\epsilon}_1^T,\boldsymbol{\epsilon}_2^T, \ldots \boldsymbol{\epsilon}_n^T)^T$ and $\boldsymbol{\epsilon}_1, \boldsymbol{\epsilon}_2 \ldots \boldsymbol{\epsilon}_n \sim \mathcal{N}(\mathbf{0}, \mathbf{I})$. When the PIA attack metric $R_{t,p}$ is adapted to the HOLD++ SDE, it becomes:

\[R_{t,p} = \left|\left|\mathcal{F}\mathbf{x}_t - \frac{1}{2}\mathcal{G}\mathcal{G}^T \mathbf{S}_\theta(\mathbf{x}_t, t) \right|\right|_p,\]
where $\mathbf{S}_\theta(\mathbf{x}_t, t) = (\boldsymbol{0}^T, \ldots \boldsymbol{0}^T, \mathbf{s}_\theta(\mathbf{x}_t, t)^T)^T$. The true scores of the first $(n-1)d$ entries are replaced with ``$\boldsymbol{0}$'' because they all cancel with $\mathcal{G}\mathcal{G}^T$. In this expression $\mathbf{x}_t$ is estimated deterministically with \eqref{eq:forwardsampling} using $\mathbf{s}_{\theta}(\mathbf{x}_0, 0)$ to estimate $\boldsymbol{\epsilon}_n$ with $\boldsymbol{\epsilon}_n \approx -\mathbf{s}_\theta(\mathbf{x}_0,0)\mathbf{L}_0[-1,-1]$, where $\mathbf{L}_0[-1,-1]$ denotes the matrix element in the final row and column. We may even further simplify the attack metric as: $R_{t,p} = \left|\left|\mathcal{F}\mathbf{x}_t - \xi L^{-1} \mathbf{S}_\theta(\mathbf{x}_t, t) \right|\right|_p$.



The specifics of this attack using HOLD++ are summarized in Algorithm \ref{alg:PIAHOLD}. With regular diffusion processes, $\mathbf{s}_{\theta}(\mathbf{x}_0, 0)$ is all one would need to estimate $\mathbf{x}_t$, but in the HOLD++ context, this quantity only informs us of the score function of the last auxiliary variable. Ideally, one would use $\boldsymbol{\epsilon}_1, \boldsymbol{\epsilon}_2, \ldots \boldsymbol{\epsilon}_{n-1} \sim \mathcal{N}(\mathbf{0}, \mathbf{I}_d)$ to match the true distribution of $\mathbf{x}_t$, but doing so defeats the purpose of using a deterministic attack metric. Therefore, the best thing one can do is set $\boldsymbol{\epsilon}_1, \boldsymbol{\epsilon}_2, \ldots \boldsymbol{\epsilon}_{n-1} = \mathbf{0}$. This work has attempted both, but only presents the results for $\boldsymbol{\epsilon}_1, \boldsymbol{\epsilon}_2, \ldots \boldsymbol{\epsilon}_{n-1} = \mathbf{0}$ as they are a more effective attack. Additionally, this work sets the auxiliary variables to zero during attack time, to avoid additional randomness. The attack metric involving $\mathcal{G}\mathcal{G}^T$ derives from the reverse deterministic process of the forward SDE \cite{diffusioncts}.

\begin{algorithm}
\caption{PIA Attack with HOLD++}\label{alg:PIAHOLD}
\begin{algorithmic}[1]
    \State \textbf{Input:} Data point $\mathbf{q}_0$ and Score Network $\mathbf{s}_{\theta}$, threshold $\tau$.
    \State $\mathbf{x}_0 \gets (\mathbf{q}_0^T, \mathbf{0}^T, \mathbf{0}^T, \ldots \mathbf{0}^T)^T$
    \For {$k = 1$ to $n_{time}$}
        \State $t \gets (k-1)T/n_{time}$, Calculate $\boldsymbol{\mu}_t, \boldsymbol{\Sigma}_t, \mathbf{L}_t$ using \eqref{eq:fwdsolutions}.

        \State $\boldsymbol{\epsilon}_n \gets -\mathbf{s}_\theta(\mathbf{x}_0,0)\mathbf{L}_0[-1,-1]$

        \State $\mathbf{\epsilon}_{full} \gets \begin{pmatrix}
        \mathbf{0}^T, & \mathbf{0}^T, & \ldots & \mathbf{0}^T, & \boldsymbol{\epsilon}_n^T\\
        \end{pmatrix}^T$

        \State $\mathbf{x}_t \gets \boldsymbol{\mu}_t + \mathbf{L}_t\boldsymbol{\epsilon}_{full}$

        \State $R[n_{time}] \gets \left|\left|\mathcal{F}\mathbf{x}_t - \xi L^{-1} \mathbf{S}_\theta(\mathbf{x}_t, t) \right|\right|_p$ 
 
    \EndFor
    \State Hypothesis Test used to generate ROC Curves:
    \State $\Bar{R} \gets \frac{1}{n_{time}}\sum_{k=1}^{n_{time}} R[k]$, $\textbf{is\_in\_training\_set} \gets \Bar{R} < \tau$
    \vspace*{0.5em}
    
\end{algorithmic}
\end{algorithm}






\section{Methodology}
\label{sec:methodology}
This section rigorously proves that HOLD++ is R\'enyi Differentially Private and that this bound only depends on $\epsilon_{\text{num}}$, a variance addition to the data that ensures numerical stability. The same modification works to achieve differential privacy on traditional continuous diffusion models, but at the end of the section we demonstrate that this differential privacy, coupled with HOLD++'s non-deterministic score function, helps further deter MIAs for HOLD++. The R\'enyi divergence between two probability distributions $P$ and $Q$ is defined as:
\[D_{\alpha}(P \mid\mid Q) = \frac{1}{\alpha-1} \log \mathbb{E}_{\mathbf{y} \sim Q}\left[\left( \frac{P(\mathbf{y})}{Q(\mathbf{y})} \right)^{\alpha}\right].\]

R\'enyi-Differential-Privacy is defined for a random mechanism $f$ as follows. 
\newtheorem{definition}{Definition}[section]  
\begin{definition}
    $f:\mathcal{D} \to \mathbb{R}$ has $(\alpha, \epsilon)$ R\'enyi Differential Privacy, if for adjacent $X, X' \in \mathcal{D}$, it follows that: $D_{\alpha}(f(X) \mid\mid f(X')) \leq \epsilon$.
\end{definition}

In our case, the random mechanism is defined by $f(\mathbf{x}) = \exp(\mathcal{F}t)\mathbf{x} + \boldsymbol{\eta}$, where $\boldsymbol{\eta} \sim \mathcal{N}(\mathbf{0}, \boldsymbol{\Sigma}_t)$. To compute the R\'enyi Divergence applied to $f$, we consider the distribution $P$ of a random variable $\mathbf{y}$ outputted by $f$, and the distribution $Q$ of a random variable $\mathbf{y} + \mathbf{v}$ outputted by $f$, where $\mathbf{v}$ is the maximum difference between any two adjacent data points. Specifically,
\[P(\mathbf{y}) = \mathcal{N}(\mathbf{y} \mid \exp(\mathcal{F}t)\mathbf{x}, \boldsymbol{\Sigma}_t), \quad Q(\mathbf{y}) = \mathcal{N}(\mathbf{y}+\mathbf{v} \mid \exp(\mathcal{F}t)\mathbf{x}, \boldsymbol{\Sigma}_t),\]
\[\mathbf{v} \in \{\mathbb{R}^{n \times d} \mid \mathbf{v}^T \boldsymbol{\Sigma}_t^{-1}\mathbf{v} \leq \Delta f_t\}, \quad \text{and} \quad\]
\[\Delta f_t = \max_{\mathbf{y}, \mathbf{z} \in \mathcal{D}}(\mathbf{y}-\mathbf{z})^T\exp(\mathcal{F}t)^T\boldsymbol{\Sigma}_t^{-1}\exp(\mathcal{F}t)(\mathbf{y}-\mathbf{z}).\]

The following theorem is adapted from \cite{RenyiDiffPrivacy} in the non-isotropic Gaussian case:

\begin{lemma}
    The Random Mechanism $f(\mathbf{x}) = \exp(\mathcal{F}t)\mathbf{x} + \boldsymbol{\eta}$ where $\boldsymbol{\eta} \sim \mathcal{N}(\mathbf{0}, \boldsymbol{\Sigma}_t) $ satisfies RDP($\alpha$, $\frac{\alpha \Delta f_t}{2}$).
\end{lemma}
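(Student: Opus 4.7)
The plan is to reduce the claim to the well-known closed-form Rényi divergence between two equal-covariance multivariate Gaussians, and then invoke the defining constraint on $\mathbf{v}$ to bound the resulting Mahalanobis term by $\Delta f_t$.

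First I would identify $P$ and $Q$ as two multivariate Gaussians sharing the same covariance $\boldsymbol{\Sigma}_t$. The density $P$ is $\mathcal{N}(\exp(\mathcal{F}t)\mathbf{x},\boldsymbol{\Sigma}_t)$ by construction, and $Q(\mathbf{y}) = \mathcal{N}(\mathbf{y}+\mathbf{v}\mid \exp(\mathcal{F}t)\mathbf{x},\boldsymbol{\Sigma}_t)$ is just the density of $\mathcal{N}(\exp(\mathcal{F}t)\mathbf{x}-\mathbf{v},\boldsymbol{\Sigma}_t)$ evaluated at $\mathbf{y}$. The mean difference is therefore $\Delta\boldsymbol{\mu} = \mathbf{v}$, and the ratio $P/Q$ is a log-quadratic in $\mathbf{y}$ whose quadratic-in-$\mathbf{y}$ terms cancel because the covariances agree.

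Second, I would compute $D_\alpha(P\Vert Q)$ explicitly. After the quadratic cancellation, $(P/Q)^\alpha$ is the exponential of an affine function of $\mathbf{y}$, so $\mathbb{E}_{\mathbf{y}\sim Q}[(P/Q)^\alpha]$ becomes a Gaussian moment-generating function evaluated at $-\alpha\boldsymbol{\Sigma}_t^{-1}\mathbf{v}$. Collecting all the exponents and dividing by $\alpha-1$ produces the standard identity
\[
D_\alpha(P\Vert Q) \;=\; \frac{\alpha}{2}\,\mathbf{v}^T \boldsymbol{\Sigma}_t^{-1}\mathbf{v}.
\]
This is the non-isotropic analogue of the familiar formula $\tfrac{\alpha}{2}\|\Delta\mu\|^2/\sigma^2$ from the isotropic Gaussian mechanism of \cite{RenyiDiffPrivacy}.

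Third, the admissibility constraint $\mathbf{v}^T\boldsymbol{\Sigma}_t^{-1}\mathbf{v}\le \Delta f_t$ is exactly what one obtains by writing $\mathbf{v} = \exp(\mathcal{F}t)(\mathbf{x}-\mathbf{x}')$ for an adjacent pair $\mathbf{x},\mathbf{x}'\in\mathcal{D}$ and substituting into the quadratic form defining $\Delta f_t$. Combining the two steps gives $D_\alpha(P\Vert Q)\le \alpha\Delta f_t/2$ uniformly over all adjacent inputs, which is the RDP guarantee.

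The main obstacle is purely bookkeeping in step two: one must be careful tracking signs when reading off the mean of $Q$ from the shifted-argument density notation, and the MGF integral must be checked to converge for all $\alpha>1$ (it does, since the exponent is only linear in $\mathbf{y}$ after cancellation). Everything else is a direct substitution.
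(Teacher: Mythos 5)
Your proposal is correct and follows essentially the same route as the paper: both reduce the claim to the closed-form R\'enyi divergence between two Gaussians with common covariance $\boldsymbol{\Sigma}_t$ and mean offset $\mathbf{v}$, obtaining $D_\alpha(P\Vert Q)=\tfrac{\alpha}{2}\mathbf{v}^T\boldsymbol{\Sigma}_t^{-1}\mathbf{v}$, and then invoke the admissibility constraint $\mathbf{v}^T\boldsymbol{\Sigma}_t^{-1}\mathbf{v}\le\Delta f_t$. Your evaluation of the expectation via the Gaussian moment-generating function is just a repackaging of the paper's change of variables and completing-the-square step, so the two arguments are the same in substance.
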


\begin{proof}

Start by computing:
\[\mathbb{E}_{\mathbf{y} \sim Q}\left( \frac{P_t(\mathbf{y})}{Q_t(\mathbf{y})}\right)^{\alpha} = \int_{\mathbf{y} \in \mathbb{R}^{nd}} \frac{P_t(\mathbf{y})^{\alpha}}{Q_t(\mathbf{y})^{\alpha}}Q_t(\mathbf{y}) d\mathbf{y}.\]
After making a change of variables $\mathbf{u} = \mathbf{y} - \exp(\mathcal{F}t)\mathbf{x}$ the above expression becomes:  

\[(2\pi)^{-nd/2} \det(\boldsymbol{\Sigma}_t)^{-1/2} \int_{\mathbf{u} \in \mathbb{R}^{nd}} \exp\left( \left(\frac{\alpha-1}{2}\right)(\mathbf{u} + \mathbf{v})^T \boldsymbol{\Sigma}_t^{-1}(\mathbf{u} + \mathbf{v}) - \frac{\alpha}{2}\mathbf{u}^T\boldsymbol{\Sigma}_t^{-1}\mathbf{u}\right) d\mathbf{u}.\]

Note the identity: $\mathbf{u}^T \boldsymbol{\Sigma}_t^{-1}\mathbf{u} - 2(\alpha-1)\mathbf{v}^T \boldsymbol{\Sigma}_t^{-1}\mathbf{u} = (\mathbf{u}-(\alpha-1)\mathbf{v})^T \boldsymbol{\Sigma}_t^{-1}(\mathbf{u} - (\alpha-1)\mathbf{v}) - (\alpha-1)^2 \mathbf{v}^T \mathbf{\Sigma}_t^{-1} \mathbf{v}$. This allows us to complete the square and evaluate the expectation:

\[D_{\alpha}(P_t \mid\mid Q_t) = \frac{1}{\alpha-1} \log \exp\left(\frac{(\alpha-1) + (\alpha-1)^2}{2} \mathbf{v}^T \boldsymbol{\Sigma}_t^{-1}\mathbf{v} \right) = \frac{\alpha}{2}\mathbf{v}^T\boldsymbol{\Sigma}_t^{-1}\mathbf{v} \leq \frac{\alpha \Delta f_t}{2}.\]

\end{proof}

Now, define $\mathbf{R}_t = (\exp(\mathcal{F}t)^T\boldsymbol{\Sigma}_t^{-1}\exp(\mathcal{F}t))^{-1}$, the effective correlation matrix. Using the derived formula for $\boldsymbol{\Sigma}_t$ and some algebraic simplifications: $\mathbf{R}_t =  L^{-1}\left(\exp(\mathcal{F}t)^{T}\exp(\mathcal{F}t)\right)^{-1} + \boldsymbol{\Sigma}_0 - L^{-1}\mathbf{I}$. Now:
\[\Delta f_t = \max_{\mathbf{y},\mathbf{z} \in \mathcal{D}}(\mathbf{y}-\mathbf{z})^T\mathbf{R}_t^{-1}(\mathbf{y} - \mathbf{z}).\]
\begin{lemma}
    $\Delta f_t$ monotonically decreases with $t$.
\end{lemma}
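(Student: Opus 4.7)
The plan is to reduce the scalar monotonicity of $\Delta f_t$ to a Loewner-order statement on a single time-varying matrix, and then establish that statement via a short energy argument that uses only the structure of $\mathcal{F}$.

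For the reduction, I would observe that since $\Delta f_t$ is the maximum over a fixed set $\mathcal{D}$ of the quadratic forms $\mathbf{w}^T \mathbf{R}_t^{-1} \mathbf{w}$ with $\mathbf{w} = \mathbf{y}-\mathbf{z}$, it suffices to show that $\mathbf{R}_t^{-1}$ is monotonically non-increasing in the Loewner order. By operator monotonicity of the matrix inverse on positive-definite matrices, this is equivalent to $\mathbf{R}_t$ being Loewner-nondecreasing in $t$. Plugging in the formula $\mathbf{R}_t = L^{-1} M(t)^{-1} + (\boldsymbol{\Sigma}_0 - L^{-1}\mathbf{I})$ with $M(t) := \exp(\mathcal{F}t)^{T}\exp(\mathcal{F}t)$, the $t$-independent summand drops out, so I only need $M(t)^{-1}$ to be Loewner-nondecreasing, equivalently $M(t)$ to be Loewner-nonincreasing.

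The main step is then a short energy/Lyapunov argument. For a fixed $\mathbf{v}$, set $\mathbf{v}(t) := \exp(\mathcal{F}t)\mathbf{v}$, so that $\mathbf{v}^T M(t) \mathbf{v} = \|\mathbf{v}(t)\|^2$ and $\dot{\mathbf{v}}(t) = \mathcal{F}\mathbf{v}(t)$, hence $\frac{d}{dt}\|\mathbf{v}(t)\|^2 = \mathbf{v}(t)^T(\mathcal{F}+\mathcal{F}^T)\mathbf{v}(t)$. A direct look at the block structure of $\mathbf{F}$ shows that every skew-symmetric coupling block $\gamma_i(\mathbf{E}_{i,i+1}-\mathbf{E}_{i+1,i})$ cancels under $\mathbf{F}+\mathbf{F}^T$, leaving only the damping, so $\mathcal{F}+\mathcal{F}^T = -2\xi\,\mathbf{E}_{n,n}\otimes\mathbf{I}_d \preceq 0$. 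Therefore $\|\mathbf{v}(t)\|^2$ is non-increasing for every $\mathbf{v}$, $M(t)$ is Loewner-nonincreasing, and chaining the reductions backwards yields the desired monotonicity of $\Delta f_t$.

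I do not anticipate a serious technical obstacle. The one nontrivial observation is that the $\gamma_i$ terms, which are responsible for the symplectic coupling between the data and the auxiliary variables, contribute nothing to the symmetric part of $\mathcal{F}$, so all of the Lyapunov dissipation is carried by the single damping block $-\xi\mathbf{E}_{n,n}$. Once this is recognized, the remainder is a routine appeal to operator monotonicity of the inverse and taking the supremum over $\mathcal{D}$.
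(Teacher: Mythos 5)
Your proposal is correct, and it reaches the conclusion by a different route than the paper. The paper differentiates the quadratic form directly: it invokes the formula $\frac{d}{dt}\mathbf{R}_t^{-1} = -\mathbf{R}_t^{-1}\frac{d\mathbf{R}_t}{dt}\mathbf{R}_t^{-1}$ together with the assertion that $\mathcal{F}$ is negative definite, and concludes $\frac{d\Delta f_t}{dt} < 0$. You instead never differentiate $\mathbf{R}_t^{-1}$: you reduce the claim, via operator antitonicity of the inverse and the decomposition $\mathbf{R}_t = L^{-1}M(t)^{-1} + (\boldsymbol{\Sigma}_0 - L^{-1}\mathbf{I})$, to Loewner monotonicity of $M(t)=\exp(\mathcal{F}t)^T\exp(\mathcal{F}t)$, which you settle with the Lyapunov computation $\frac{d}{dt}\lVert \exp(\mathcal{F}t)\mathbf{v}\rVert^2 = \mathbf{v}(t)^T(\mathcal{F}+\mathcal{F}^T)\mathbf{v}(t)$ and the observation that the skew $\gamma_i$-couplings cancel, leaving $\mathcal{F}+\mathcal{F}^T = -2\xi\,\mathbf{E}_{n,n}\otimes\mathbf{I}_d \preceq 0$. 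This buys two things: it sidesteps the paper's implicit exchange of $\max$ and $\frac{d}{dt}$ (you prove pointwise decrease of the quadratic forms and then take the supremum, which is cleaner), and it is more honest about the structure of $\mathcal{F}$, whose symmetric part is only negative \emph{semi}-definite, not negative definite as the paper's proof asserts. The flip side is that your argument, as stated, only yields that $\Delta f_t$ is non-increasing, since the derivative $-2\xi\lVert\mathbf{v}_n(t)\rVert^2$ can vanish when the last block of $\mathbf{v}(t)$ is zero; if strict decrease is wanted (as the paper claims), you would add that the $\gamma_i$-chain makes the last block observable, so $\mathbf{v}_n(t)$ cannot vanish on an interval for $\mathbf{v}\neq\mathbf{0}$, whence $\lVert\exp(\mathcal{F}t)\mathbf{v}\rVert^2$ is strictly decreasing between any two times. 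For the way the lemma is used downstream ($\Delta f_t \leq \Delta f_0$), the weak monotonicity you prove already suffices.
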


\begin{proof}
$\mathcal{F}$ is negative definite, and the following formula for the derivative of an inverse matrix: $\frac{d \mathbf{R}_t^{-1}}{dt} = -\mathbf{R}_t^{-1}\frac{d \mathbf{R}_t}{dt}\mathbf{R}_t^{-1}$. One may use these two details to prove the following equation that implies that $\Delta f_t$ monotonically decreases with $t$ and the maximum $\Delta f_t$ occurs at $t=0$.
\[\frac{d \Delta f_t}{dt} = \max_{\mathbf{y},\mathbf{z} \in \mathcal{D}}(\mathbf{y}-\mathbf{z})^T\frac{d \mathbf{R}_t^{-1}}{dt}(\mathbf{y} - \mathbf{z}) < 0.\]
\end{proof}
It follows algebraically that $\mathbf{R}_0 = \boldsymbol{\Sigma}_0$, thus the maximum $\Delta f_t$ is $\max_{\mathbf{y},\mathbf{z} \in \mathcal{D}}(\mathbf{y} - \mathbf{z})^T \boldsymbol{\Sigma}_0^{-1}(\mathbf{y} - \mathbf{z})$. $\boldsymbol{\Sigma}_0 = \textbf{diag}(\epsilon_{\text{num}}, \beta L^{-1} ,\ldots, \beta L^{-1})$, where $\epsilon_{\text{num}}$ is a small initial variance for the position component. In practice, $\epsilon_{\text{num}} \ll \beta L^{-1}$, therefore:
\[\Delta f_t \leq \Delta f_0 =  \max_{\mathbf{y},\mathbf{z} \in \mathcal{D}}(\mathbf{y}-\mathbf{z})^T\boldsymbol{\Sigma}_0^{-1}(\mathbf{y} - \mathbf{z})\]
\[\approx \max_{\mathbf{y},\mathbf{z} \in \mathcal{D}}(\mathbf{y}-\mathbf{z})^T\text{diag}(1/\epsilon_{\text{num}},0,\ldots 0)(\mathbf{y} - \mathbf{z})=\frac{\Delta_2 f}{\epsilon_{\text{num}}},\]
where $\Delta_2 f = \max_{\mathbf{y},\mathbf{z} \in \mathcal{D}}||\mathbf{y} - \mathbf{z}||^2$ is the regular data sensitivity (excluding the auxiliary variables) used in other works. We may derive an upper bound on the true privacy loss, the R\'enyi Divergence between marginals $P_t(\mathbf{q}_t)$ and $Q_t(\mathbf{q}_t)$:
\[D_{\alpha}(P_t(\mathbf{q}_t) \mid\mid Q_t(\mathbf{q}_t)) \leq D_{\alpha}(P_t \mid\mid Q_t) \leq \frac{\alpha \Delta f_t}{2} \leq \frac{\alpha \Delta f_0}{2} \approx \frac{\alpha \Delta_2 f}{2\epsilon_{\text{num}}}\]

The first inequality is true because marginals admit lower R\'enyi Divergence than joint distributions. This implies that HOLD++ is R\'enyi differentially private for $\epsilon = \frac{\alpha \Delta f_0}{2}$. We note that this bound is overly conservative, as it also bounds the joint privacy loss including all the auxiliary variables $\mathbf{p}_0, \mathbf{s}_0 \ldots$ etc. In real world experiments when $\epsilon_{\text{num}} \ll \beta L^{-1}$, the privacy loss simplifies. This strongly resembles the bound derived in \cite{RenyiDiffPrivacy}. The obvious problem is that small $\epsilon $ and $\epsilon_{\text{num}}$ cannot be achieved at the same time. Therefore, instead of solely relying on Differential Privacy, we argue that presence of auxiliary variables helps prevent an attacker from inferring membership. Having proven that privacy loss is maximized at the beginning of the diffusion process, we consider the Mean Squared Error (MSE) between $\mathbf{x}_{guess} = (\mathbf{q}_0^T, \mathbf{0}^T)^T$ and $\mathbf{x}_{truth} = (\mathbf{q}_0^T, \beta L^{-1}\mathbf{z}^T)^T$, where $\mathbf{q}_0$ is a point in the training data set and $\mathbf{z} \sim \mathcal{N}(\mathbf{0}, \mathbf{I}_{n-1})$. It follows: $\mathbb{E}(||\mathbf{x}_{guess}-\mathbf{x}_{truth}||^2) = \beta L^{-1}(n-1)$. This implies that the MSE may be ``tuned'' by the forward diffusion process by adjusting $\beta$, $L^{-1}$, and $n$, trading off model complexity, sample quality, and privacy leakage.

\section{Experiments and Results}
\label{sec:results}

The theoretical section claims that PIA can be defended against using higher model orders $n$ and higher starting variances $\beta L^{-1}$. This section seeks to validate this claim on the Swiss Roll and CIFAR-10 datasets. The validation metric that this paper primarily uses is the Area Under the ROC curve (AUROC) that comes from running PIA. An AUROC close to 1.0 indicates that the attack can perfectly differentiate training versus holdout data points, whereas an AUROC close to 0.5 indicates that the attack does not do better than randomly guessing. The code is publicly available in the supplementary material.


\begin{figure}
    \centering
    \includegraphics[width=\linewidth]{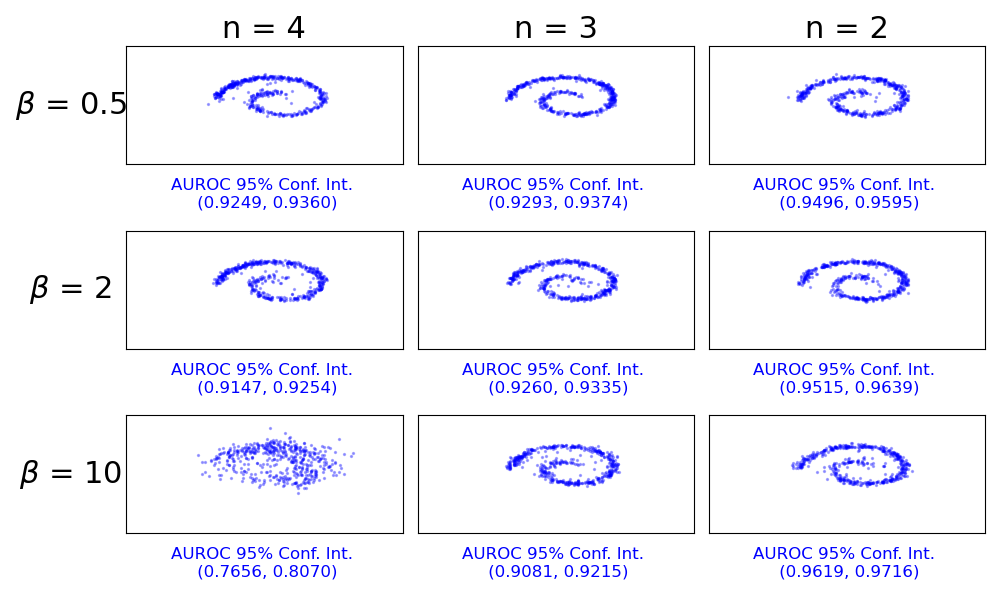}
    \caption{Generated Spirals grouped by model order $n$, variance factor $\beta$, and $\epsilon_{\text{num}}$ for $L^{-1}=1$. $95\%$ confidence intervals of the AUROC's with $25$ sample runs are presented.}
    \label{fig:experiments}
\end{figure}

\begin{figure}
    \centering
    \includegraphics[width=\linewidth]{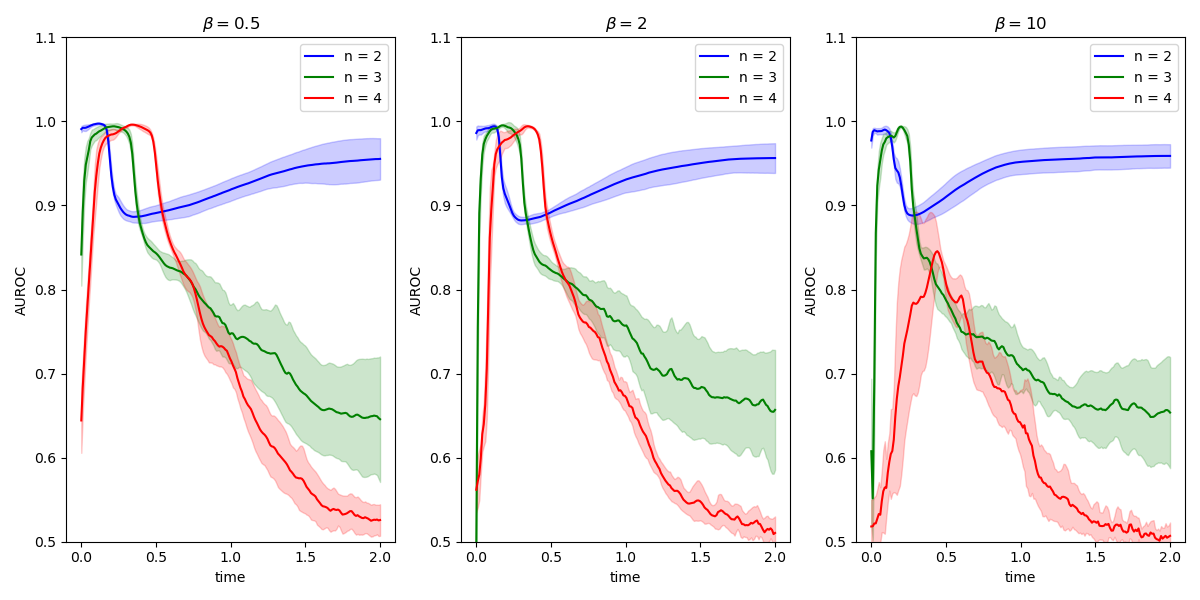}
    \caption{AUROC with 95\% confidence intervals for $n$ as a function of diffusion time for spiral dataset. These are obtained by directly thresholding $R$ (not $\Bar{R}$) referring to Algorithm \ref{alg:PIAHOLD}.}
    \label{fig:aurocbyt}
\end{figure}

Regarding the Swiss Roll dataset, the training and validation datasets are taken to be non-overlapping. Independent sessions are run in Figure \ref{fig:experiments} for differing $n$, $\beta$, and $\epsilon_{\text{num}}$ with fixed $L^{-1}=1$. These runs were repeated $25$ times to obtain confidence intervals and performed with $40{,}000$ training epochs. A fully connected feedforward neural network was used with ReLU activation, layer normalization, and a total depth of $15$ layers. Please, see the supplementary material for full architectural details. As predicted, AUROC tends to decrease with increasing $n$ and $\beta$. Notably, for $\beta=2,10$, the AUROC $95\%$ confidence intervals do not overlap, suggesting that as $\beta$ increases, the pairwise differences in AUROC grow more statistically significant as one changes the order of the model $n$. Figure \ref{fig:aurocbyt} illustrates how privacy loss is distributed over diffusion time, further demonstrating that higher model orders $n$ are more resistant to MIA, with vulnerabilities more time-localized. All of these results demonstrate that the implicit trade-off one makes under this defense scheme is model order $n$, variance factor $\beta$, privacy leakage controlled by $\epsilon_{\text{num}}$, AUROC, and visible spiral quality.

\begin{figure}
    \centering
    \includegraphics[width=\linewidth]{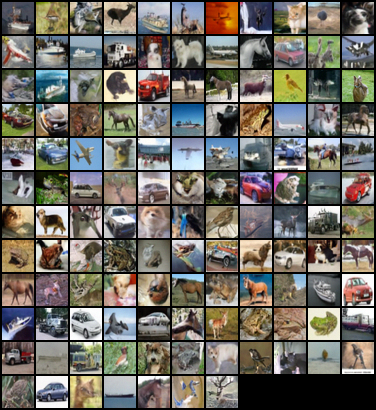}
    \caption{Generated CIFAR-10 samples from a VPSDE Diffusion Model with an FID of 7.03 and AUROC of 0.503 under the continuous proximal initialization attack.}
    \label{fig:fullcifar10}
\end{figure}

The rest of this section describes the experiments run on the CIFAR-10 dataset. The Fr\'echet Inception Distance (FID) metric \citep{FIDmetric} is used to evaluate sample quality. This work, to our knowledge, is the first that attempted running the continuous proximal initialization attack on any real-world image dataset, as the PIA manuscript used grad-TTS \citep{gradtts} and the LJ Speech dataset for their continuous attack. The PIA manuscript only used CIFAR-10 to validate the discrete time version of their attack. Our first finding is that the continuous PIA is ineffective on continuous diffusion models trained on the full CIFAR-10 dataset. Figure \ref{fig:fullcifar10} demonstrates this on a diffusion model trained with the Variance-Preserving SDE (VPSDE). These images are generated after training for $150{,}000$ iterations, resulting in an FID of 7.03 and AUROC of 0.503, roughly equivalent to random guessing. In these experiments, data augmentation, that is random horizontal flipping, is disabled to enhance the attacker's abilities. For the VPSDE, simple substitution yields a PIA attack metric of $R_{t, p} = \frac{\beta(t)}{2}|| \mathbf{x}_t + \mathbf{s}_{\theta}(\mathbf{x}_t, t) ||_p$. In order to validate the theory presented in this manuscript, the remaining experiments in Figure \ref{fig:yieldcurves} are performed with 128 training images and 128 validation images taken from the CIFAR-10 dataset. The current state-of-the-art protection against MIA is the DPDM that uses DP-SGD during training to account for differential privacy. This experiment uses the DPDM strategy as a baseline, but does not use the same privacy accounting that DPDMs do because privacy in this work is analyzed on the sample level whereas privacy in the DPDM work is analyzed on the gradient level. Therefore, the baselines in these experiments are run on the VPSDE with the following operation performed on the training gradients: $\text{grad} = \operatorname{clip}(\text{grad}, 1.0) + \nu \mathcal{N}(\mathbf{0}, \mathbf{I})$.

\begin{figure}
    \centering
    \includegraphics[width=\linewidth]{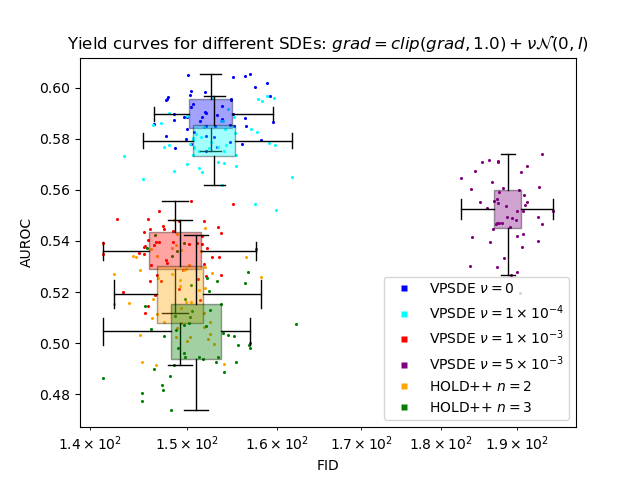}
    \caption{Yield curves on continuous diffusion models trained on 128 training images and 128 validation images of the CIFAR-10 dataset. The box and whisker plots denote the two dimensional $25^{\text{th}}$ and $75^{\text{th}}$ percentiles.}
    \label{fig:yieldcurves}
\end{figure}

Figure \ref{fig:yieldcurves} plots FIDs and AUROCs of the different models during training after they have converged to a stationary distribution. Lower FIDs indicate higher sample quality and lower AUROCs indicate lower attack performance, so the best performing models are located on the lower left-hand side of the graph. The FIDs are not on the same scale as regular state-of-the-art diffusion models because they are calculated using the 128 validation images; calculating FIDs on the full validation dataset for each point was computationally intractable, and doing so would not add any comparative value. Figure \ref{fig:yieldcurves} demonstrates that training models with HOLD++ results in superior performance when compared to the VPSDE trained models over the chosen $\nu$. For the sake of argumentation, if there was a parameter $\nu$ that results in a better model than HOLD++, finding such a $\nu$ is expensive, and there is no better way to find it than by training many separate models. Comparatively, HOLD++ does not require any hyperparameter tuning and evades the PIA attack by its very structure. For implementation and architecture specific details, please see the provided supplementary material.

\section{Conclusion}
\label{sec:conclusion}

It is well known that regularization helps to prevent membership inference attacks in generative models. This work provides a way to implicitly regularize using the diffusion process itself, without requiring direct data augmentation. This method works additionally well because existing membership inference attacks on diffusion models rely on the score being deterministically derived from the score network. The HOLD++ score network only models the score of the very last auxiliary variable, which means that it is not possible to run an attack with a fully deterministic score. The paper also demonstrates that this lack of deterministic score may be paired with the concept of differential privacy to help reduce membership privacy loss without a significant loss in generated data quality, making HOLD++ a practical alternative to DPDMs.

\bibliography{tmlr}
\bibliographystyle{tmlr}

\appendix

\end{document}